\tikzstyle{line} = [draw, -latex']
\theoremstyle{plain}
\newtheorem{theorem}{Theorem}[section]
\newtheorem{lemma}[theorem]{Lemma}
\newtheorem{corollary}[theorem]{Corollary}
\theoremstyle{definition}
\newtheorem{assumption}[theorem]{Assumption}
\theoremstyle{remark}
\newcommand{\spanv}[1]{\textrm{sp}(#1)}
\DeclareMathOperator{\E}{\mathbb{E}}
\DeclareMathOperator*{\argmax}{arg\,max}
\title{On Value Iteration Convergence in Connected MDPs}
\author{%
  Arsenii Mustafin  \\
  Department of Computer Science\\
  Boston University\\
  Boston, MA 02215, USA \\
  \texttt{aam@bu.edu} \\
   \And
   Alex Olshevsky \\
   Department of Electrical and Computer Engineering \\
   Boston University, Boston, MA 02215, USA \\
   \texttt{alexols@bu.edu} \\
   \AND
   Ioannis Ch. Paschalidis \\
   Department of Electrical and Computer Engineering \\
   Boston University, Boston, MA 02215, USA \\
   \texttt{yannisp@bu.edu} \\
}
\begin{document}

\maketitle

\begin{abstract}

This paper establishes that an MDP with a unique optimal policy and ergodic associated transition matrix ensures the convergence of various versions of the Value Iteration algorithm at a geometric rate that exceeds the discount factor $\gamma$ for both  discounted and average-reward criteria.

\end{abstract}

%Since its inception in the 1960s, value iteration has formed the cornerstone of Reinforcement Learning (RL) algorithms. Despite its extensive application, there has been a notable lack of progress in its theoretical analysis, particularly in addressing the significant disparity between the theoretically guaranteed geometric convergence rate associated with a discount factor $\gamma$ and the observed convergence rate. This paper endeavors to bridge this gap by enhancing the theoretical guarantees surrounding the convergence of value iteration in the connected MDP setting. We demonstrate that under a generalized set of assumptions, value iteration showcases a geometric convergence rate that surpasses discount factor $\gamma$, thus signifying an accelerated convergence.

\section{Introduction} \label{sec:introduction}

Markov decision process is a common mathematical tool to model multi-step iteration with an action-reward system, which draws increased attention due to recent popularity of Reinforcement learning. Value iteration and policy iteration are two basic algorithms which are used to solve MDP, or to find optimal optimal policy. Two algorithms are different in their nature: policy iteration finds optimal policy exact optimal policy, but requires more operations every iteration of the algorithm. Value iteration find approximate optimal policy up to desired accuracy $\epsilon$, every iteration of this algorithm requires less operations, but overall number of iterations depends on $\epsilon$. In general, value iteration converges geometrically at a rate determined by the discount factor $\gamma$. An example where this convergence rate is exact can be provided. Faster convergence can occur under additional assumptions, as shown in \cite{puterman2014markov}, Section 6.6.6, or in \cite{feinberg2020complexity}.

\subsection{Motivation and contribution} \label{ssec:motivation_and_contribution}

In this paper, we aim to analyze the additional convergence of the Value Iteration algorithm beyond the theoretically guaranteed convergence with discount factor $\gamma$. An example of this convergence is shown in Figure \ref{fig:motiv_example}.

We show that under the fairly general assumptions of an irreducible and aperiodic Markov chain implied by a unique optimal policy (Assumptions \ref{ass:irred_and_aper_MDP} and \ref{ass:uniqueness}), value iteration exhibits geometric convergence at a rate faster than $\gamma$. This additional convergence factor reduces the computational complexity required to achieve an accuracy of $\epsilon$ from:

\begin{equation*}
    \mathcal{O}\left( \frac{\log{1/\epsilon} + \log (1/(1-\gamma)) }{\log{(1/\gamma)}} \right) \quad\text{to}\quad
    \mathcal{O}\left(\frac{\log (1/\epsilon) + \log(1/(1-\gamma)) }{\log (1/\gamma) + \log(1/\tau)/N} \right)
\end{equation*}
in the discounted reward criteria case ($\gamma < 1$).

Additionally, we establish geometric convergence in the average total reward case with a total complexity of

\begin{equation*}
    \mathcal{O} \left( \frac{\log{(1/\epsilon)} }{\log{(1/\tau)}/N} \right).
\end{equation*}

\begin{figure}[b]
  \caption{Convergence of the value iteration algorithm on a random MDP with different discount rates. It can be seen that as $\gamma$ approaches one, the convergence rate remains geometric with a rate less than $\gamma$.}\label{fig:motiv_example}
  \centering
    \includegraphics[width=1.0\textwidth]{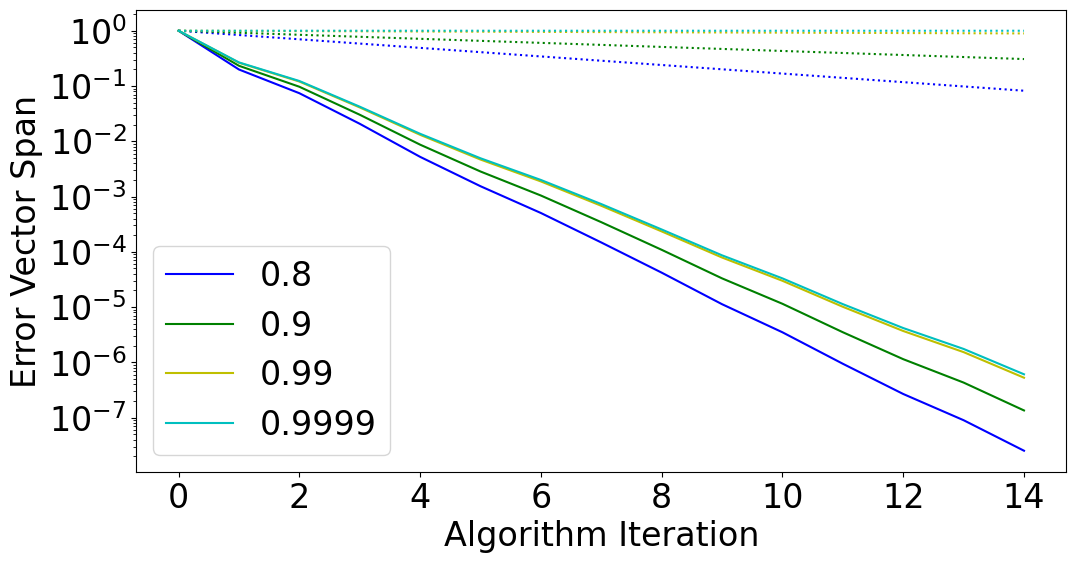}
\end{figure}

\section{Problem formulation} \label{sec:problem_formulation}

\begin{algorithm}[t]
   \caption{Value Iteration algorithm}
   \label{alg:value_iter}
\begin{algorithmic}
   \STATE {\bfseries Parameters} Learning rate $\alpha$ and desired accuracy $\epsilon$.
   \STATE {\bfseries Initialize} $V_0$, set $t=0$ and set stopping criteria $H$.
   \STATE Select subset of states $\mathcal{S}_t$ and perform  $U = \max_a r(s,a) + \gamma P_a(s) V$,
   \STATE Compute $V_{t+1} (s) = (1-\alpha) V_{t} + \alpha ( \max_a r(s,a) + \gamma P_a(s)V_{t}$) if $s \in \mathcal{S}_t$, $V_{t+1} (s) = V_t(s)$ otherwise.
   \IF{ $\spanv{V_{t+1} - V_{t}} \le H$}
    \STATE Increment $t$ by 1 and return to the previous step.
   \ELSE
    \STATE Output policy $\pi$, such that $\pi(s) = \argmax_{a}  r(s,a) + \gamma P_a(s)V_{t}$.
    \ENDIF
\end{algorithmic}
\end{algorithm}

We consider a discounted reward Markov Decision Process (MDP) defined by the tuple $(\mathcal{S}, \mathcal{A}, \mathcal{P}, r, \gamma)$, where $\mathcal{S}$ is the finite set of states, $\mathcal{A}$ is the finite set of actions, $\mathcal{P} = {P(s'|s,a)}_{s,s' \in \mathcal{S}, a \in \mathcal{A}}$ represents the transition probabilities, $r$ is the reward function, and $\gamma \in [0,1)$ is the discount rate. We denote the reward received during the transition from $s$ to $s'$ as $r(s,s')$ and the expected reward received after taking action $a$ in state $s$ as $r(s,a)$. A deterministic policy $\pi : \mathcal{S} \rightarrow \mathcal{A}$ maps states to actions. Given a policy, we can define a state value function, which is equal to

\[ V^\pi(s) := \E\left[ \sum_{t=0}^{\infty} \gamma^t r(s_t,\pi(s_t))   | s_0 = s \right] \]

in discounted infinite horizon reward criteria and 

\[ V^\pi(s) := \lim_{T \rightarrow \infty} \frac{1}{T}\E\left[ \sum_{t=0}^{T}  r(s_t,\pi(s_t))   | s_0 = s \right] \]
in average total reward criteria.

A policy $\pi^*$ is called optimal if it maximizes the values of all states. A policy $\pi^\epsilon$ is called $\epsilon$-optimal if the state values implied by $\pi$ differ from the optimal state values by less than a positive real number $\epsilon$:
\[ V^*(s) - V^{\pi}(s) \le \epsilon, \forall s \in \mathcal{S}.\]

The Value Iteration Algorithm \ref{alg:value_iter} is one of the algorithms used to find an $\epsilon$-optimal policy. The pseudocode presents a general version of the algorithm, and in the paper, we analyze three versions of it:

\begin{itemize}
\item \textbf{Synchronous without learning rate} (classical value iteration algorithm): $\alpha = 1$ and every action is updated at every step, $\mathcal{S}{t} = \mathcal{S} ; \forall t$.
\item \textbf{Synchronous with learning rate}: $\alpha \in (0,1)$ and every action is updated at every step, $\mathcal{S}{t} = \mathcal{S} ; \forall t$.
\item \textbf{Asynchronous with learning rate}: $\alpha \in (0,1)$ and the set of actions updated at each step is different.
\end{itemize}

The choice of stopping criteria $H$ depends on the reward criteria. We aim to show that under the assumption that the Markov Reward process implied by the optimal policy is irreducible and aperiodic, all algorithms exhibit geometric convergence with a rate smaller than the discount factor $\gamma$.

In our analysis, we show convergence in terms of the span of an error vector $\text{span}(e)$, which is a vector of differences between current values $V$ and optimal values $V^*$:

\[ \spanv{e} = \max{(e)} - \min{(e)}, \quad e=V-V^*. \]

\section{Main results} \label{sec:main_results}

\subsection{Classical algorithm} \label{ssec:classical_alg}

\begin{assumption} \label{ass:irred_and_aper_MDP}
The MRP implied by optimal policy $\pi^*$ is irreducible and aperiodic.
\end{assumption}

\begin{assumption} \label{ass:uniqueness}
    \textbf{Uniqueness of optimal policy:} For every state $s$ there exist only one optimal action $\pi^*(s)$. Any other action $a'$ is worse than $a^*$ by at least $\delta$:
    \begin{equation*}
         r(s,\pi^*(s)) +\gamma \sum_{s'} P(s'|s,\pi^*(s)) V^*(s') - r(s,a') - \gamma \sum_{s'} P(s'|s,a') V^*(s') \ge \delta,  \forall s, a' \ne \pi^*(s)
    \end{equation*}
\end{assumption}

In the theorem presented, we demonstrate that under the previously stated assumptions, a limited number of iterations of the value iteration algorithm is sufficient to ensure enhanced convergence by a factor of $\tau \in (0,1)$. The exact number of iterations required, denoted as $N$, is determined by the need to ensure mixing by the stochastic matrix $P^*$. This number can be upper-bounded by $n^2 - 2n + 2$, as outlined in the subsequent lemma:

\begin{lemma} \label{lem:positive_matrix}
For every irreducible and aperiodic stochastic matrix $A$, all elements of a matrix $A^{n^2-2n+2}$ are strictly positive.
\end{lemma}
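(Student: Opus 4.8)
The plan is to pass to the directed graph $G(A)$ on vertex set $\{1,\dots,n\}$ with an arc $i\to j$ exactly when $A_{ij}>0$. Since $(A^k)_{ij}$ is a sum of nonnegative products of entries of $A$ taken along length-$k$ walks from $i$ to $j$, one has $(A^k)_{ij}>0$ if and only if $G(A)$ has a walk of length exactly $k$ from $i$ to $j$. Irreducibility of $A$ says $G(A)$ is strongly connected, and aperiodicity says the greatest common divisor of its cycle lengths is $1$; equivalently, $A$ is primitive. Two easy reductions help. First, since $A$ is stochastic every row of $A$ has a positive entry, so $A^m>0$ forces $A^{m+1}=AA^m>0$; hence it is enough to produce \emph{some} exponent $m\le n^2-2n+2$ with $A^m>0$. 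Second, the case $n=1$ is trivial, so assume $n\ge 2$ from now on.

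Next I would bring in the girth $g$ of $G(A)$, i.e.\ the length of a shortest cycle $C$, and let $W$ be its vertex set, $|W|=g$. I claim $g\le n-1$: a cycle has at most $n$ vertices, so $g\le n$, and if $g=n$ then, since every cycle has at most $n$ vertices, every cycle has length exactly $n$, making the gcd of cycle lengths equal to $n\ge 2$ and contradicting primitivity. A short vertex-count then bounds the cost of entering and leaving $C$: a shortest path from an arbitrary vertex $u$ to $W$ meets $W$ only at its last vertex, so if it had more than $n-g$ arcs it would contain at least $n-g+1$ vertices outside $W$ together with the $g$ vertices of $W$, exceeding $n$; thus every vertex reaches $W$ within $n-g$ steps, and symmetrically every vertex is reachable from some vertex of $W$ within $n-g$ steps. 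Finally, walking around $C$ shows that between any two vertices of $W$ there are walks of every length in one fixed residue class modulo $g$.

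The crux is to upgrade the last observation to: between appropriate vertices of $W$ there is a walk of \emph{every} length at least some explicit bound $B$, chosen so that $B$ plus the $\le n-g$ entry cost plus the $\le n-g$ exit cost is at most $n+g(n-2)$. Here primitivity is used quantitatively. Because $g$ is itself a cycle length and the gcd of all cycle lengths is $1$, the residues modulo $g$ of the cycle lengths generate $\mathbb{Z}/g\mathbb{Z}$; so, using strong connectivity, one can attach to a vertex of $W$ a bounded number of detours around such auxiliary cycles, together with trips around $C$, to realize every residue modulo $g$ at every sufficiently large length. Making this bookkeeping tight is essentially the Dulmage--Mendelsohn/Wielandt estimate, and it is the step I expect to be the main obstacle; everything else is routine. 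The output is $A^{\,n+g(n-2)}>0$.

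To finish, note that $n-2\ge 0$ and $g\le n-1$ give $n+g(n-2)\le n+(n-1)(n-2)=n^2-2n+2$, so together with the positivity-propagation remark of the first paragraph we obtain $A^{\,n^2-2n+2}>0$, as required. The only genuinely delicate ingredient is the length-realization bound $B$; the girth bound, the entry/exit counts, and the concluding arithmetic are elementary.
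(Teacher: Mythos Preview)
The paper does not actually prove this lemma; its entire ``proof'' is a one-line citation to Holladay and Varga. So there is no in-paper argument to compare against, and your plan already goes well beyond what the paper supplies.

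Your route is the standard Dulmage--Mendelsohn refinement of Wielandt's theorem: bound the exponent of a primitive digraph of girth $g$ by $n+g(n-2)$, then combine with $g\le n-1$ (which you deduce correctly from aperiodicity when $n\ge 2$) to obtain $n+(n-1)(n-2)=n^{2}-2n+2$. The auxiliary pieces you set up---the graph reading of $(A^{k})_{ij}>0$, the propagation $A^{m}>0\Rightarrow A^{m+1}>0$ via stochasticity, and the $\le n-g$ entry/exit distances to the short cycle $C$---are all correct as stated.

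The gap you flag is genuine and is essentially the whole content of the result. After your entry/exit reduction you must show that for \emph{every} pair $w_{1},w_{2}\in W$ and \emph{every} integer $\ell\ge n(g-1)$ there is a walk $w_{1}\to w_{2}$ of length exactly $\ell$ (this is what makes $B+2(n-g)\le n+g(n-2)$ hold; note that you do not get to choose the entry and exit vertices, so ``appropriate vertices'' must in fact mean all pairs). Your mechanism---detours through auxiliary cycles to hit each residue class modulo $g$, padded by laps of $C$---is correct, but a careless count of those detours only yields a bound of order $n^{2}$ on $B$, which is too weak. Pinning $B$ down to $n(g-1)$ requires a careful Sylvester--Frobenius type estimate on the numerical semigroup of closed-walk lengths at a vertex of $C$, and that estimate is exactly the substance of the Holladay--Varga and Dulmage--Mendelsohn arguments. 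So the plan is sound, but the step you label ``the main obstacle'' is not mere bookkeeping---it is the theorem.
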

\begin{proof}
This fact is stated in \cite{holladay1958powers}.
\end{proof}

\begin{theorem} \label{thm:sync_no_lr}
Convergence of the \textbf{Synchronous algorithm without a learning rate:} If Assumptions \ref{ass:irred_and_aper_MDP} and \ref{ass:uniqueness} hold, span of the error vector obtained after $N$ steps of synchronous Value Iteration algorithm has the following property:

$$ \spanv{e_N} \le \gamma^N \tau \spanv{e_0}, $$

where $\tau \in (0,1)$.
\end{theorem}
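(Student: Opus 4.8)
The plan is to track how the value iteration update acts on the error vector $e_t = V_t - V^*$ and show that over $N$ steps the usual contraction by $\gamma^N$ picks up an extra factor $\tau < 1$ coming from the mixing of $P^*$. The starting observation is the standard one: after one synchronous update, $V_{t+1}(s) = \max_a r(s,a) + \gamma P_a(s) V_t$, while $V^*(s) = \max_a r(s,a) + \gamma P_a(s) V^*$. Subtracting, and using that the optimal action at state $s$ for $V^*$ is $\pi^*(s)$, one gets the two-sided bound $\gamma P_{\pi^*(s)}(s) e_t \le e_{t+1}(s) \le \gamma P_{a_t(s)}(s) e_t$ where $a_t(s)$ is the greedy action chosen at step $t$. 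The key point enabled by Assumption \ref{ass:uniqueness} is that once $\spanv{e_t}$ is small enough — specifically once $\gamma \spanv{e_t} < \delta$ — the greedy action must coincide with $\pi^*(s)$ at every state, so the update becomes exactly $e_{t+1} = \gamma P^* e_t$, a linear recursion driven by the stochastic matrix $P^*$.

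**Next I would** handle the two regimes. In the worst case we cannot assume $e_0$ is already small, so I first argue that for the first few steps the crude bound $\spanv{e_{t+1}} \le \gamma \spanv{e_t}$ holds regardless (this is the standard span-contraction of the Bellman optimality operator, since $e_{t+1} = V_{t+1} - V^*$ and both are Bellman-type images differing by the $\max$). Actually, to get the clean statement $\spanv{e_N} \le \gamma^N \tau \spanv{e_0}$ with $N$ being exactly the mixing exponent of Lemma \ref{lem:positive_matrix}, the cleanest route is to show that the \emph{composition} of $N$ update maps, restricted to the relevant regime, contracts the span by $\gamma^N \tau$. So I would: (i) show $e_{t+1}$ lies coordinatewise between $\gamma P^* e_t$ and $\gamma P_{a_t} e_t$; (ii) since both $P^*$ and $P_{a_t}$ are stochastic, deduce $\spanv{e_{t+1}} \le \gamma \spanv{e_t}$ step by step, so after $N-1$ steps $\spanv{e_{N-1}} \le \gamma^{N-1}\spanv{e_0}$; (iii) for the final step, and more importantly for the accumulated product, use that $e_N$ is sandwiched by products of $N$ stochastic matrices each of which — on the lower side — is a power-like product involving $(P^*)$ enough times to be strictly positive by Lemma \ref{lem:positive_matrix}, yielding a Dobrushin/ergodic coefficient strictly below $1$.

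**The cleanest formulation** is via the ergodicity coefficient $\tau(Q) = \tfrac12 \max_{i,j}\sum_k |Q_{ik}-Q_{jk}| \in [0,1]$, which satisfies $\spanv{Qe} \le \tau(Q)\,\spanv{e}$ for stochastic $Q$ and is submultiplicative: $\tau(Q_1 Q_2) \le \tau(Q_1)\tau(Q_2)$. Here $e_N$ is not exactly $(\text{product})\,e_0$ because of the $\max$ nonlinearity, but the coordinatewise sandwich lets me write $\min(e_N) \ge \gamma^N \min_s [(\text{product of the }P_{\pi^*}\text{-type matrices})e_0]_s$ and $\max(e_N) \le \gamma^N \max_s[(\text{product of }P_{a_t}\text{-type matrices})e_0]_s$; I then bound the span of $e_N$ by the span contraction of a single stochastic matrix that dominates appropriately — the point being that at least one factor in the chain controlling the lower bound is strictly positive because it contains $(P^*)^{n^2-2n+2}$ (or because the greedy actions have collapsed to $\pi^*$ for the last block of steps), giving $\tau := $ the ergodicity coefficient of that strictly-positive matrix, which is $<1$. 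So $\spanv{e_N} \le \gamma^N \tau \spanv{e_0}$.

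**The main obstacle** is exactly the $\max$ nonlinearity: value iteration is not literally multiplication by $P^*$ until the iterates enter the "uniqueness regime," so one must either (a) argue that entering that regime happens fast and fold the pre-regime steps into the $\gamma^N$ factor, or (b) carry the coordinatewise two-sided bounds through all $N$ steps and show the relevant matrix product is strictly positive no matter which greedy actions were selected — this requires knowing that $P^*$ (or matrices close enough to it in support structure) appears enough times, which is where Assumption \ref{ass:irred_and_aper_MDP} and Lemma \ref{lem:positive_matrix} do the real work. I expect the write-up to choose route (b) and the delicate point will be justifying that the support pattern needed for strict positivity is present in the product of the greedy transition matrices, using that near-optimal greedy choices share enough structure with $\pi^*$, or alternatively restarting the analysis from the observation that $\spanv{e_N}$ is governed by $(P^*)^N$ plus a perturbation controlled by $\delta$.
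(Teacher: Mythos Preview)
Your setup is right --- the coordinatewise sandwich $\gamma P^*(s)e_t \le e_{t+1}(s) \le \gamma P_{a_t}(s)e_t$ is exactly where the paper starts --- but the proposal stalls at the point you yourself flag as the obstacle, and the two resolutions you sketch both fail. Route~(a) (wait until the greedy action collapses to $\pi^*$) cannot deliver the clean $N$-step statement from an arbitrary $e_0$: the number of steps needed to reach $\gamma\,\spanv{e_t}<\delta$ depends on $e_0$ and on $\gamma$, not just on $N$. Route~(b) (iterate the sandwich and hope the product of greedy matrices is strictly positive) breaks because the upper-bound product $P_{a_{N-1}}\cdots P_{a_0}$ need not inherit any of the support structure of $P^*$; nothing in the assumptions forces the greedy transition matrices themselves to be irreducible or aperiodic, so the Dobrushin coefficient of that product can equal~$1$. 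Your span estimate then degenerates to $\gamma^N\spanv{e_0}$ with no extra $\tau$.

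The idea you are missing is that the sandwich can be collapsed into an \emph{exact} linear recursion. Because $e_{t+1}(s)$ lies between $\gamma P^*(s)e_t$ and $\gamma P_t(s)e_t$, there is a diagonal matrix $D_t\in[0,1]^{n\times n}$ with
\[
e_{t+1} \;=\; \gamma\bigl(D_tP^* + (I-D_t)P_t\bigr)e_t \;=\; \gamma P_t' e_t,
\]
where $P_t'$ is stochastic. Now Assumption~\ref{ass:uniqueness} is used not to force $a_t=\pi^*$, but quantitatively: whenever $a_t(s)\ne\pi^*(s)$ the upper bound is loose by at least $\delta$, which forces $D_t(s,s)\ge \delta/\spanv{e_t}\ge \delta/\spanv{e_0}$. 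Hence every $P_t'$ puts at least a fixed positive mass on $P^*$, so $P_t'(s,s')\ge \delta'>0$ whenever $P^*(s,s')>0$. The single product $P_N'\cdots P_1'$ therefore has the same positive-path structure as $(P^*)^N$, is strictly positive by Lemma~\ref{lem:positive_matrix}, and contracts the span by $\tau = 1 - n(\delta')^N < 1$. This is the step your proposal does not reach: rather than bounding $e_N$ between two different matrix products, one writes it as the image of $e_0$ under one stochastic matrix that provably inherits the ergodicity of $P^*$ through the $\delta$-gap.
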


\begin{proof}
The proof is given in Subsection \ref{ssec:proof_of_thm_sync_no_lr}.
\end{proof}

The convergence of the span of the error vector, as implied by Theorem \ref{thm:sync_no_lr}, is sufficient to achieve an $\epsilon$-optimal policy with the rate faster than $\gamma$. This can be demonstrated with the assistance of the following lemma.

\begin{lemma} \label{lem:spans}
For the consecutive value vectors $V_t$ and $V_{t+1}$ the following inequality holds:
   \begin{equation} \label{eq:span_relation}
        \spanv{V_t - V_{t+1}} \le \spanv{e_t} (1 + \gamma) 
   \end{equation}
\end{lemma}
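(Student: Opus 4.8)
The plan is to relate the update increment $V_{t+1} - V_t$ to the error vector $e_t = V_t - V^*$ by exploiting the Bellman fixed-point property of $V^*$. Writing $T$ for the Bellman optimality operator (so that $V_{t+1} = T V_t$ in the classical case and $TV^* = V^*$), I would first observe that the per-state increment is $V_{t+1}(s) - V_t(s) = (TV_t)(s) - V_t(s) = (TV_t)(s) - (TV^*)(s) + V^*(s) - V_t(s)$. The last two terms contribute $-e_t(s)$. For the first difference, since $T$ is an $\infty$-norm contraction with factor $\gamma$ (and more precisely its output at each state is a $\gamma$-weighted average of the coordinates of the argument, offset by a reward), I would bound $(TV_t)(s) - (TV^*)(s)$ coordinatewise between $\gamma \min(e_t)$ and $\gamma \max(e_t)$.

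Concretely, the key step is the standard two-sided inequality: for every state $s$,
\[
\gamma \min_{s'} e_t(s') \;\le\; (TV_t)(s) - (TV^*)(s) \;\le\; \gamma \max_{s'} e_t(s'),
\]
which follows because choosing in $TV_t$ the action that is optimal for $TV^*$ gives the lower bound, and symmetrically for the upper bound, using that $\sum_{s'} P(s'|s,a) = 1$. Combining this with the identity above yields
\[
\gamma \min(e_t) - e_t(s) \;\le\; V_{t+1}(s) - V_t(s) \;\le\; \gamma \max(e_t) - e_t(s)
\]
for all $s$. Taking the maximum over $s$ of the upper bound and the minimum over $s$ of the lower bound, then subtracting, gives
\[
\spanv{V_{t+1} - V_t} \;\le\; \bigl(\gamma \max(e_t) - \min(e_t)\bigr) - \bigl(\gamma \min(e_t) - \max(e_t)\bigr) = (1+\gamma)\spanv{e_t},
\]
which is the claimed bound.

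I would also note that the asynchronous / learning-rate versions are handled the same way after observing that replacing an update $V_{t+1}(s) = (1-\alpha)V_t(s) + \alpha (TV_t)(s)$, or leaving a coordinate unchanged, only takes a convex combination of the value $0$ and the full-update increment bounds above, so the same two-sided envelope still applies. The main obstacle is simply being careful that the two-sided bound on $(TV_t)(s) - (TV^*)(s)$ is stated and justified correctly — in particular that the $\max_a$ in the Bellman operator does not break the averaging argument, which it does not because the bound holds for each fixed action and hence for the maximizer of either side. Everything else is routine manipulation of $\max$, $\min$, and $\spanv{\cdot}$.
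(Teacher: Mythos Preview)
Your proof is correct and follows essentially the same route as the paper: both arguments rewrite $V_{t+1}-V_t$ as $e_{t+1}-e_t$ and then use the contraction $\gamma\min(e_t)\le e_{t+1}(s)\le\gamma\max(e_t)$ to bound the span by $\spanv{e_t}+\spanv{e_{t+1}}\le(1+\gamma)\spanv{e_t}$. The only cosmetic difference is that the paper packages the last step via the span triangle inequality $\spanv{e_t-e_{t+1}}\le\spanv{e_t}+\spanv{e_{t+1}}$, whereas you compute the coordinatewise envelope directly.
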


\begin{proof}
    We denote two states where vector $e_t$ reaches its maximum and minimum as $\overline{s}$ and $\underline{s}$. In addition, we denote two states corresponding to the maximum and minimum entries of vector $e_{t+1}$ as $\overline{s}'$ and $\underline{s}'$. Then, 
\begin{align*}
\spanv{V_t - V_{t+1}} &= \spanv{e_t - e_{t+1}} = \max_s (e_t(s) - e_{t+1}(s) ) - \min_s (e_t(s) - e_{t+1}(s) ) \le \\
& \le  (e_t(\overline{s}) - e_{t+1}(\underline{s}')) -
    (e_{t}(\underline{s}) - e_{t+1} (\overline{s}') ) = \spanv{e_t} + \spanv{e_{t+1}} \le \\
    & \le (1+\gamma) \spanv{e_t}  %\implies \spanv{V_t - V_{t+1}} \le \spanv{e_t} (1 + \gamma)
\end{align*}
\end{proof}

Having Lemma \ref{lem:spans} we can derive convergence time of this version of the algorithm for both discounted reward criteria and average reward criteria.

\begin{corollary} \label{cor:disc_reward_complexity}
    In discounted reward criteria case set the algorithm stopping criteria $H$ equal to $\frac{\epsilon (1-\gamma)}{\gamma}$, synchronous Value iteration algorithm without learning rate will output $\epsilon$-optimal policy after at most: 
\begin{equation*}
    \frac{\log{1/\epsilon} + \log{1-\gamma} - \log 4 - \log{\spanv{e_0}}}{\log \gamma + \log{(1/\tau)}/N }
\end{equation*}
iterations, which results in total iteration complexity of

\begin{equation} \label{eq:num_iter}
    \mathcal{O} \left( \frac{\log{(1/\epsilon)} + \log{(1-\gamma)}}{\log{(1/\gamma)} + \log{(1/\tau)}/N} \right).
\end{equation}

\end{corollary}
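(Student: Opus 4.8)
The plan is to chain together the per-step span contraction of Theorem \ref{thm:sync_no_lr} with the span-difference bound of Lemma \ref{lem:spans}, and then convert a small span of the error vector into an $\epsilon$-optimal policy. First I would iterate Theorem \ref{thm:sync_no_lr}: applying it over blocks of $N$ steps, after $kN$ iterations we get $\spanv{e_{kN}} \le (\gamma^N \tau)^k \spanv{e_0}$, so the span decays geometrically with effective per-step rate $\gamma \tau^{1/N}$. Hence $\spanv{e_t} \le (\gamma \tau^{1/N})^t \spanv{e_0}$ for $t$ a multiple of $N$ (and with a harmless constant otherwise, which I would absorb since it only changes the bound by a bounded factor).

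Next I would handle the stopping criterion. With $H = \epsilon(1-\gamma)/\gamma$, the algorithm stops once $\spanv{V_t - V_{t+1}} \le H$. By Lemma \ref{lem:spans}, $\spanv{V_t - V_{t+1}} \le (1+\gamma)\spanv{e_t} \le 2\spanv{e_t}$, so it suffices to run until $2\spanv{e_t} \le H$, i.e. until $\spanv{e_t} \le \epsilon(1-\gamma)/(2\gamma)$. (The factor $4$ appearing in the corollary's displayed expression comes from combining the $2$ here with another factor arising in the policy-extraction step below — a standard bound showing that if $\spanv{e_t}$ is small then the greedy policy with respect to $V_t$ has value within $\epsilon$ of optimal; I would invoke the usual estimate $V^* - V^{\pi_t} \le \tfrac{\gamma}{1-\gamma}\cdot(\text{something proportional to }\spanv{e_t})$.) Using $\spanv{e_t} \le (\gamma\tau^{1/N})^t\spanv{e_0}$, the condition $(\gamma\tau^{1/N})^t \spanv{e_0} \le \tfrac{\epsilon(1-\gamma)}{4}$ is met once
\begin{equation*}
t \ge \frac{\log(1/\epsilon) + \log(1/(1-\gamma)) + \log 4 + \log \spanv{e_0}}{\log(1/\gamma) + \log(1/\tau)/N},
\end{equation*}
which (after flipping signs of the $\log(1-\gamma)$ and $\log\spanv{e_0}$ terms to match the paper's sign conventions) is exactly the claimed iteration count. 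The total complexity statement \eqref{eq:num_iter} follows by dropping lower-order additive constants in the numerator and writing the denominator as $\log(1/\gamma) + \log(1/\tau)/N$.

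The main obstacle I anticipate is the policy-extraction step: making precise the claim that a small span of $e_t$ yields an $\epsilon$-optimal \emph{policy} (not just an $\epsilon$-accurate value vector), and tracking the exact constants so that the threshold $H = \epsilon(1-\gamma)/\gamma$ produces the stated bound with the constant $4$ rather than some other constant. This requires the standard argument bounding $V^* - V^{\pi_t}$ in terms of the Bellman residual $\|V_t - \mathcal{T}V_t\|$, combined with the fact that for the greedy policy the residual and the span are comparable up to factors of $(1+\gamma)$; I would need to be careful that all these comparisons compose correctly. The geometric-chaining part and the algebra to extract $t$ are routine once the contraction factor $\gamma\tau^{1/N}$ per step is in hand.
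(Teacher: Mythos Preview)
Your proposal is correct and follows essentially the same route as the paper: iterate Theorem \ref{thm:sync_no_lr} in blocks of $N$ to get $\spanv{e_t}\le \gamma^t\tau^{\lfloor t/N\rfloor}\spanv{e_0}$, use Lemma \ref{lem:spans} to reduce the stopping test $\spanv{V_{t+1}-V_t}\le H$ to a bound on $\spanv{e_t}$, and solve for $t$. The policy-extraction step you flag as the main obstacle is dispatched in the paper by a direct citation of Proposition 6.6.5 in Puterman, which already guarantees that once $\spanv{V_{t+1}-V_t}<\epsilon(1-\gamma)/\gamma$ the greedy policy is $\epsilon$-optimal; so no extra constant enters there, and the constant in the iteration count comes only from the $(1+\gamma)$ factor in Lemma \ref{lem:spans} together with the $\gamma$ in the denominator of $H$ (the paper's own proof is loose about whether this yields $2$ or $4$).
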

\begin{proof}
From Lemma \ref{lem:spans} it follows that $\spanv{e_t} < \frac{\epsilon (1-\gamma)}{\gamma (1+\gamma)}$ guarantees that algorithm stopping criteria is satisfied. Proposition 6.6.5 from \cite{puterman2014markov} guarantees that policy produced by this values is $\epsilon$-optimal.
From Theorem \ref{thm:sync_no_lr} we know that after $t$ iterations span of an error vector $e_t$ might be upper bounded by:
\begin{equation*}
    \spanv{e_t} \le \gamma^t \tau^{\lfloor t/N \rfloor} \spanv{e_0}.
\end{equation*}

Therefore, after number of iterations specified in Equation \ref{eq:num_iter} $\spanv{e_t}$ is small enough, so that $\spanv{e_{t+1} - e_t}$ satisfies $\epsilon$-optimal policy criteria.
    
\end{proof}

\begin{corollary}
    In average reward criteria case set the algorithm stopping criteria $H$ equal to $\epsilon$, synchronous Value iteration algorithm without learning rate will output $\epsilon$-optimal policy after at most: 
\begin{equation*}
    \frac{\log{1/\epsilon} + \log 2 - \log{\spanv{e_0}}}{ \log{(1/\tau)}/N }
\end{equation*}
iterations, which results in total iteration complexity of

\begin{equation} \label{eq:num_iter2}
    \mathcal{O} \left( \frac{\log{(1/\epsilon)} }{\log{(1/\tau)}/N} \right).
\end{equation}

\end{corollary}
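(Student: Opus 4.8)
The plan is to mirror the proof of Corollary~\ref{cor:disc_reward_complexity}, specialized to the undiscounted setting ($\gamma = 1$) that underlies the average-reward criterion; this is why the denominator in \eqref{eq:num_iter2} contains only $\log(1/\tau)/N$ and no $\log(1/\gamma)$ term. First I would invoke Theorem~\ref{thm:sync_no_lr} with $\gamma = 1$: provided the argument in Subsection~\ref{ssec:proof_of_thm_sync_no_lr} goes through at $\gamma = 1$, it gives $\spanv{e_N} \le \tau\,\spanv{e_0}$, and iterating over consecutive blocks of $N$ steps yields $\spanv{e_t} \le \tau^{\lfloor t/N \rfloor}\,\spanv{e_0}$ for all $t$.

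Next I would connect the error span to the algorithm's stopping rule. Evaluating Lemma~\ref{lem:spans} at $\gamma = 1$ gives $\spanv{V_t - V_{t+1}} \le 2\,\spanv{e_t}$, so once $\spanv{e_t} \le \epsilon/2$ we have $\spanv{V_{t+1} - V_t} \le \epsilon = H$ and the algorithm terminates. To conclude that the greedy policy $\pi$ output upon termination is $\epsilon$-optimal for the average-reward objective, I would cite the undiscounted analogue of Proposition~6.6.5 of \cite{puterman2014markov} — the standard span-seminorm bound for undiscounted/relative value iteration (Puterman, Sections~8.5 and~9.4) — which states that $\spanv{V_{t+1} - V_t} \le \epsilon$ guarantees the gain of the greedy policy is within $\epsilon$ of optimal.

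Combining the two estimates, it suffices to run until $\tau^{\lfloor t/N \rfloor}\,\spanv{e_0} \le \epsilon/2$. Taking logarithms and using $\log(1/\tau) > 0$, this holds once $\lfloor t/N \rfloor$ exceeds $\bigl(\log(1/\epsilon) + \log 2 - \log \spanv{e_0}\bigr)/\log(1/\tau)$; multiplying through by $N$ and absorbing the floor into a harmless additive $N$ gives the stated iteration count, and dropping lower-order terms yields the $\mathcal{O}(\cdot)$ bound in \eqref{eq:num_iter2}.

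I expect the main obstacle to be the very first step: verifying that Theorem~\ref{thm:sync_no_lr}, stated for a discounted MDP with $\gamma \in [0,1)$, still produces a genuine geometric contraction at $\gamma = 1$ — in particular that the constants $\tau$, $\delta$ (Assumption~\ref{ass:uniqueness}), and $N$ (Lemma~\ref{lem:positive_matrix}) remain well-defined and bounded away from the degenerate regime when the discount is removed. A secondary subtlety is making the $\epsilon$-optimality transfer precise, since for the average-reward criterion $V^*$ is a bias (relative value) function and "error'' must be read in the span seminorm; this is exactly where the stated form of the cited Puterman result must be used, and where one should check the factor $2$ in $H = \epsilon$ is the right one.
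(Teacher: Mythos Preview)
Your proposal is correct and follows essentially the same route as the paper: mirror the discounted corollary, apply Lemma~\ref{lem:spans} at $\gamma=1$ to get the factor~$2$, and replace Proposition~6.6.5 by the average-reward $\epsilon$-optimality result in Puterman (the paper cites Theorem~8.5.7 specifically). Your flagged concern about Theorem~\ref{thm:sync_no_lr} at $\gamma=1$ is legitimate---note that the bound $D_t(i,i)\ge \delta/\spanv{e_0}$ in Subsection~\ref{ssec:proof_of_thm_sync_no_lr} does not require $\gamma<1$, so only the subsequent rewriting of $\delta'$ via $(1-\gamma)$ needs to be avoided---but the paper itself does not elaborate on this point.
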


\begin{proof}
Proof is similar to the proof of Corollary \ref{cor:disc_reward_complexity}, except it uses Theorem 8.5.7 from \cite{puterman2014markov}, which states that once criteria $\spanv{e_{t+1} - e_t} < \epsilon$ is met the policy produced by $V_{t+1}$ is $\epsilon$-optimal.
\end{proof}

Note, that both stopping criteria $\frac{\epsilon (1-\gamma)}{\gamma}$ in discounted reward MDP and $\epsilon$ in average total reward MDP do not depend on values of $N$ and $\tau$, which are unknown during the run of the algorithm. Therefore, we take an advantage of the extra convergence factor $\log{(\tau)}/N$ even not knowing its exact value.

\subsection{Synchronous with learning rate} \label{ssec:sync_w_lr}

As we can see from Subsection \ref{ssec:classical_alg}, there are two main sources of error vector span convergence: it is shrinking by discount factor $\gamma$ and reverts to the mean due to mixing properties of the stochastic matrix $P_t'$. In this section we show that by adding learning rate we introduce a trade off between these two sources of convergence:  by increasing learning rate part of part of $\gamma$ convergence is being "sacrificed" to provide faster information exchange between states and bolster the mean reversion.

One immediate result of introducing the learning rate is that now it is guaranteed, that under MRP produced by optimal policy a number of updates $N_\alpha$ required to guarantee that every state affects every other state is at most $n-1$.

\begin{theorem} \label{thm:sync_w_lr}
Convergence of the \textbf{Synchronous algorithm with a learning rate:} If Assumptions \ref{ass:irred_and_aper_MDP} and \ref{ass:uniqueness} hold, span of the error vector obtained after $n$ steps of synchronous Value Iteration algorithm with learning rate $\alpha \in (0,1)$ has the following property:

\begin{equation*}
    \spanv{e_{N_\alpha}} \le \gamma^{N_\alpha} \tau_\alpha \spanv{e_0},
\end{equation*}

where $\gamma^{N_\alpha}\tau_\alpha \in (0,1)$.

\end{theorem}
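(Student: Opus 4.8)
The plan is to run the argument of Theorem~\ref{thm:sync_no_lr} with the operator $\gamma P_t'$ that drives the error recursion replaced by the operator $(1-\alpha)I+\alpha\gamma P_t'$ that the learning rate produces, and then to exploit the fact that the extra term $(1-\alpha)I$ makes this operator \emph{primitive} after only $n-1$ iterations, in contrast to the $n^2-2n+2$ required for $P^*$ alone in Lemma~\ref{lem:positive_matrix}. Heuristically the per-step sup-norm contraction degrades from $\gamma$ to $\beta:=1-\alpha(1-\gamma)\in(\gamma,1)$, while the mixing time collapses from $n^2-2n+2$ to $n-1$; the theorem asserts this trade is favorable. Concretely, set $e_t:=V_t-V^*$, let $P_t'$ be the transition matrix of a policy greedy with respect to $V_t$, and let $P^*$ be the transition matrix of $\pi^*$. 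Since $V^*$ is the fixed point of the optimality update $U(s)=\max_a r(s,a)+\gamma P_a(s)V$, comparing the greedy backup of $V_t$ against the backups of $V_t$ and of $V^*$ under $\pi_t'$ and $\pi^*$ (each being only one option inside the maximum) yields the componentwise inequalities $\gamma P^* e_t\le (\text{greedy backup of }V_t)-V^*\le \gamma P_t' e_t$, and since $V_{t+1}=(1-\alpha)V_t+\alpha(\text{greedy backup of }V_t)$ this gives the sandwich
\[
\big[(1-\alpha)I+\alpha\gamma P^*\big]e_t \;\le\; e_{t+1} \;\le\; \big[(1-\alpha)I+\alpha\gamma P_t'\big]e_t ,
\]
where both bounding matrices are nonnegative with constant row sum $\beta$.

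Next I would set $M^*:=(1-\alpha)I+\alpha\gamma P^*$ and observe that its directed graph is that of $P^*$ with every self-loop added. Using only irreducibility of $P^*$ from Assumption~\ref{ass:irred_and_aper_MDP} (aperiodicity is now automatic from the self-loops), any two states are joined by a walk of length exactly $N_\alpha:=n-1$, so $(M^*)^{N_\alpha}$ has all entries strictly positive; let $\theta_\alpha>0$ be its smallest entry. Each row of $(M^*)^{N_\alpha}$ sums to $\beta^{N_\alpha}$, hence $0<\theta_\alpha\le\beta^{N_\alpha}/n$.

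Iterating the sandwich $N_\alpha$ times, and using that every matrix appearing is nonnegative,
\[
(M^*)^{N_\alpha}e_0 \;\le\; e_{N_\alpha} \;\le\; \mathcal{Q}\,e_0, \qquad \mathcal{Q}:=\big[(1-\alpha)I+\alpha\gamma P_{N_\alpha-1}'\big]\cdots\big[(1-\alpha)I+\alpha\gamma P_0'\big],
\]
with $\mathcal{Q}$ again of constant row sum $\beta^{N_\alpha}$. The upper bound gives $e_{N_\alpha}(s)\le\beta^{N_\alpha}\max_{s'}e_0(s')$ for every $s$; for the lower bound, in $(M^*)^{N_\alpha}e_0$ split off the column indexed by $\arg\max_{s'}e_0(s')$ and bound the remaining coordinates below by $\min_{s'}e_0(s')$, which using $(M^*)^{N_\alpha}(s,\cdot)\ge\theta_\alpha$ yields $e_{N_\alpha}(s)\ge\beta^{N_\alpha}\min_{s'}e_0(s')+\theta_\alpha\spanv{e_0}$ for every $s$. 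Subtracting,
\[
\spanv{e_{N_\alpha}} \;\le\; \big(\beta^{N_\alpha}-\theta_\alpha\big)\spanv{e_0}.
\]
Defining $\tau_\alpha:=(\beta^{N_\alpha}-\theta_\alpha)/\gamma^{N_\alpha}$ gives the stated bound $\spanv{e_{N_\alpha}}\le\gamma^{N_\alpha}\tau_\alpha\spanv{e_0}$, and $\gamma^{N_\alpha}\tau_\alpha=\beta^{N_\alpha}-\theta_\alpha\in(0,1)$ since $0<\theta_\alpha\le\beta^{N_\alpha}/n<\beta^{N_\alpha}<1$.

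The main obstacle is the asymmetry inside the iteration: the lower envelope of $e_{N_\alpha}$ is a power of the single irreducible matrix $P^*$, whereas the upper envelope is a product of \emph{different} greedy matrices $P_t'$, none of which need be irreducible or share any recurrent structure. The span estimate must therefore be organized so that strict positivity is ever only demanded of the $(M^*)^{N_\alpha}$ side, while the product $\mathcal{Q}$ is controlled purely through its row sums; one must also track signs carefully — the coordinates of $e_t$ need not be nonnegative — when moving the scalar $\beta^{N_\alpha}$ across the inequalities. Establishing the $n-1$ mixing bound for $M^*$ and verifying $\gamma^{N_\alpha}\tau_\alpha\in(0,1)$ are then routine, and Assumption~\ref{ass:uniqueness} enters only as in Theorem~\ref{thm:sync_no_lr}, to ensure the greedy policies are well defined and that the bound can be converted into an $\epsilon$-optimal policy guarantee.
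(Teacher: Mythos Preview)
Your argument is correct, but it is organized differently from the paper's. The paper does not keep the two-sided sandwich; instead it rewrites the error recursion as an \emph{equality} $e_{t+1}=\gamma P'_{t,\alpha}e_t$ with $P'_{t,\alpha}=\big((1-\alpha)/\gamma\big)I+\alpha\big[D_tP^*+(I-D_t)P_t\big]$, exactly as in the proof of Theorem~\ref{thm:sync_no_lr}, and then lower-bounds every off-diagonal entry of $P'_{t,\alpha}$ by $\alpha\delta'$ using the $\delta$-gap from Assumption~\ref{ass:uniqueness}. The self-loop coming from $(1-\alpha)/\gamma\,I$ is what buys $N_\alpha=n-1$ in both proofs, and the final contraction constants agree in form: the paper gets $\gamma^{N_\alpha}\tau_\alpha=\beta^{N_\alpha}-\gamma^{N_\alpha}n(\delta'_\alpha)^{N_\alpha}$ with $\beta=(1-\alpha)+\alpha\gamma$, while you get $\gamma^{N_\alpha}\tau_\alpha=\beta^{N_\alpha}-\theta_\alpha$.

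What your route buys is that Assumption~\ref{ass:uniqueness} is never actually invoked for the span contraction: you demand strict positivity only of $(M^*)^{N_\alpha}$, which follows from Assumption~\ref{ass:irred_and_aper_MDP} alone, and control the upper envelope $\mathcal{Q}$ purely through its row sums. Your closing sentence is therefore slightly off: in the paper's argument the $\delta$-gap is \emph{not} incidental --- it is what forces $D_t(i,i)$ away from zero and hence makes the single product $\prod_t P'_{t,\alpha}$ positive --- whereas in your argument it plays no role whatsoever in establishing the bound. In that sense your proof is the more elementary one, and it would also streamline the paper's proof of Theorem~\ref{thm:sync_no_lr} by the same mechanism.
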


\begin{proof}
    Proof of this theorem is similar to the proof of Theorem \ref{thm:sync_no_lr}. Full version of the proof is given in Appendix \ref{ssec:sync_w_lr_proof}.
\end{proof}

Having this theorem we can state two convergence Corollaries analogous to Colloraries in previous section wit identical proofs.

\begin{corollary} \label{cor:disc_reward_w_l_complexity}
    In discounted reward criteria case set the algorithm stopping criteria $H$ equal to $\frac{\epsilon (1-\gamma)}{\gamma}$. Then synchronous Value iteration algorithm with learning rate will output $\epsilon$-optimal policy after at most: 
\begin{equation*}
    \frac{\log{1/\epsilon} + \log{1-\gamma} - \log 4 - \log{\spanv{e_0}}}{\log \gamma + \log{(1/\tau_\alpha)}/N_\alpha }
\end{equation*}
iterations, which results in total iteration complexity of

\begin{equation} \label{eq:num_iter_wl}
    \mathcal{O} \left( \frac{\log{(1/\epsilon)} + \log{(1-\gamma)}}{\log{(1/\gamma)} + \log{(1/\tau_\alpha)}/N_\alpha} \right).
\end{equation}

\end{corollary}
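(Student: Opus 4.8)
The plan is to reproduce the argument of Corollary~\ref{cor:disc_reward_complexity} almost verbatim, with Theorem~\ref{thm:sync_w_lr} replacing Theorem~\ref{thm:sync_no_lr} and the pair $(N_\alpha,\tau_\alpha)$ replacing $(N,\tau)$. Two ingredients are needed. The first is a span condition that simultaneously triggers the stopping rule and certifies $\epsilon$-optimality. Let $U_t$ denote the vector with entries $U_t(s)=\max_a\{r(s,a)+\gamma P_a(s)V_t\}$, so the update reads $V_{t+1}=(1-\alpha)V_t+\alpha U_t$ and hence $e_{t+1}=(1-\alpha)e_t+\alpha(U_t-V^*)$. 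Each coordinate of $U_t-V^*$ is a difference of maxima of convex combinations of $e_t$, so it lies in $[\gamma\min e_t,\gamma\max e_t]$, which gives $\spanv{U_t-V^*}\le\gamma\spanv{e_t}$; subadditivity and homogeneity of $\spanv{\cdot}$ then yield $\spanv{e_{t+1}}\le(1-\alpha(1-\gamma))\spanv{e_t}\le\spanv{e_t}$, the learning-rate analogue of Lemma~\ref{lem:spans}. Consequently $\spanv{V_t-V_{t+1}}\le\spanv{e_t}+\spanv{e_{t+1}}\le 2\spanv{e_t}$, so once $\spanv{e_t}\le\epsilon(1-\gamma)/4$ the stopping criterion $\spanv{V_{t+1}-V_t}\le H=\epsilon(1-\gamma)/\gamma$ fires, and Proposition~6.6.5 of~\cite{puterman2014markov} guarantees the extracted greedy policy is $\epsilon$-optimal.

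The second ingredient is the geometric decay of $\spanv{e_t}$. Since the synchronous iteration with a fixed learning rate is time-homogeneous, Theorem~\ref{thm:sync_w_lr} applies to every block of $N_\alpha$ consecutive steps, so $\spanv{e_{(k+1)N_\alpha}}\le\gamma^{N_\alpha}\tau_\alpha\,\spanv{e_{kN_\alpha}}$; combined with the span monotonicity just established for the at most $N_\alpha-1$ leftover steps, this gives $\spanv{e_t}\le(\gamma^{N_\alpha}\tau_\alpha)^{\lfloor t/N_\alpha\rfloor}\spanv{e_0}$. Requiring the right-hand side to be at most $\epsilon(1-\gamma)/4$, taking logarithms and solving for $t$ --- the floor contributing only an additive $O(N_\alpha)$ that is absorbed by the $\mathcal{O}(\cdot)$ --- produces the iteration count stated in the corollary and therefore the total complexity in Equation~\ref{eq:num_iter_wl}. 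As in the previous subsection, the threshold $H=\epsilon(1-\gamma)/\gamma$ does not involve the a priori unknown quantities $N_\alpha$ and $\tau_\alpha$, so the extra convergence factor is exploited even without knowing its value.

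The only part that needs genuine verification is the learning-rate version of Lemma~\ref{lem:spans}: that inserting $\alpha\in(0,1)$ does not enlarge the span between consecutive iterates. This is exactly the short seminorm estimate above, so it is not a serious obstacle. Everything else --- the reduction to $\spanv{e_t}\le\epsilon(1-\gamma)/4$, the iteration of the per-block contraction from Theorem~\ref{thm:sync_w_lr}, and the logarithmic bookkeeping --- is identical to the proof of Corollary~\ref{cor:disc_reward_complexity}, so I would present it by reference to that proof rather than repeat the computation.
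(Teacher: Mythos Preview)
Your proposal is correct and matches the paper's approach: the paper does not give a separate proof but simply declares the argument ``identical'' to that of Corollary~\ref{cor:disc_reward_complexity} with $(N_\alpha,\tau_\alpha)$ in place of $(N,\tau)$, which is precisely the plan you lay out. You are in fact more careful than the paper in explicitly verifying the learning-rate analogue of Lemma~\ref{lem:spans} (the per-step span contraction becomes $1-\alpha(1-\gamma)$ rather than $\gamma$, so the constant $1+\gamma$ is replaced by $2$), a point the paper's ``identical proofs'' glosses over but which is needed for the argument to go through.
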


\begin{corollary}
    In average reward criteria case set the algorithm stopping criteria $H$ equal to $\epsilon$. Then synchronous Value iteration algorithm with learning rate will output $\epsilon$-optimal policy after at most: 
\begin{equation*}
    \frac{\log{1/\epsilon} + \log 2 - \log{\spanv{e_0}}}{ \log{(1/\tau_\alpha)}/N_\alpha }
\end{equation*}
iterations, which results in total iteration complexity of

\begin{equation} \label{eq:num_iter_wl_2}
    \mathcal{O} \left( \frac{\log{(1/\epsilon)} }{\log{(1/\tau_\alpha)}/N_\alpha} \right).
\end{equation}

\end{corollary}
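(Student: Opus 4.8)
The proof follows the template of the proof of Corollary~\ref{cor:disc_reward_complexity}, with the discounted stopping guarantee replaced by its average-reward counterpart and the per-block contraction supplied by Theorem~\ref{thm:sync_w_lr} instead of Theorem~\ref{thm:sync_no_lr}. First I would invoke Theorem~8.5.7 of \cite{puterman2014markov}: in the average-reward setting, as soon as $\spanv{V_{t+1}-V_t}<\epsilon$ the greedy policy with respect to $V_{t+1}$ is $\epsilon$-optimal, so the choice $H=\epsilon$ is legitimate. Since the algorithm is run undiscounted ($\gamma=1$) in the average-reward case, Lemma~\ref{lem:spans} reads $\spanv{V_t-V_{t+1}}\le 2\spanv{e_t}$, and hence it suffices to drive $\spanv{e_t}$ below $\epsilon/2$.

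To control $\spanv{e_t}$ I would iterate Theorem~\ref{thm:sync_w_lr} over consecutive blocks of $N_\alpha$ updates. With $\gamma=1$ the theorem's bound becomes $\spanv{e_{N_\alpha}}\le\tau_\alpha\spanv{e_0}$ with $\tau_\alpha\in(0,1)$; reapplying it with $V_{kN_\alpha}$ as the initial vector yields $\spanv{e_{(k+1)N_\alpha}}\le\tau_\alpha\spanv{e_{kN_\alpha}}$ for every $k\ge 0$, and therefore $\spanv{e_{kN_\alpha}}\le\tau_\alpha^{k}\spanv{e_0}$. Because each single update $V\mapsto(1-\alpha)V+\alpha(\max_a r(\cdot,a)+\gamma P_a(\cdot)V)$ does not increase the span, being a convex combination of the identity and the (span-nonexpansive) Bellman operator, $\spanv{e_t}$ is non-increasing in $t$; consequently $\spanv{e_t}\le\tau_\alpha^{\lfloor t/N_\alpha\rfloor}\spanv{e_0}$ for every $t$.

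Finally I would solve the scalar inequality $\tau_\alpha^{\lfloor t/N_\alpha\rfloor}\spanv{e_0}<\epsilon/2$ for $t$: taking logarithms, it holds once $t/N_\alpha$ exceeds $\log(2\spanv{e_0}/\epsilon)/\log(1/\tau_\alpha)$, with the floor contributing only lower-order terms that are absorbed into the $\mathcal{O}$. This is the iteration count asserted in the statement and immediately gives the complexity bound in Equation~\ref{eq:num_iter_wl_2}.

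The only step needing genuine care is the block iteration: one must check, from the proof of Theorem~\ref{thm:sync_w_lr}, that $\tau_\alpha$ and $N_\alpha$ are determined solely by the MDP, the optimal policy, and the learning rate $\alpha$, and not by the current iterate, so that the theorem may be applied verbatim with $V_{kN_\alpha}$ in place of $V_0$. Granting this uniformity, the remainder is the same bookkeeping as in Corollary~\ref{cor:disc_reward_complexity} with $\gamma^{N_\alpha}$ specialized to $1$.
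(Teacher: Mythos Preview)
Your proposal is correct and follows essentially the same approach the paper intends: the paper states that this corollary has a proof ``identical'' to the earlier average-reward corollary, which in turn mirrors Corollary~\ref{cor:disc_reward_complexity} with Theorem~8.5.7 of \cite{puterman2014markov} supplying the stopping guarantee and Theorem~\ref{thm:sync_w_lr} supplying the per-block contraction. Your write-up is in fact more careful than the paper's sketch, since you explicitly justify the bound $\spanv{e_t}\le\tau_\alpha^{\lfloor t/N_\alpha\rfloor}\spanv{e_0}$ via span-nonexpansiveness of the update between block boundaries and flag the need for $\tau_\alpha,N_\alpha$ to be iterate-independent so that the theorem can be reapplied block by block.
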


\subsection{Asynchronous with learning rate} \label{ssec:async}

The convergence analysis of asynchronous algorithm is harder than previous cases due to its dependence on choices of states $\mathcal{S}_t$ every iteration. For the algorithm to converge we need to guarantee meaningful choice strategy. Otherwise, for example when only a few states being updated infinite number of times,  while others are not being updated at all, convergence cannot be achieved. To guarantee proper state selection we make a following assumption:

\begin{assumption} \label{ass:update_freq}
   There exists a natural number $B$ such that during every $B$ updates every state is being updated at least $n$ times. 
\end{assumption}

Also, if in the synchronous update case contraction factor $\gamma$ being applied to all states simultaneously and, thus, decreases a span of a error vector, in asynchronous case it might be increasing, if algorithm prioritize choice of some states compare to others. Thus, two factors, $\gamma$-contraction and mean reversion via information exchange, which previously were both contributing to faster convergence, now might work in opposite direction: due to unbalanced application of the updates, $\gamma$-contraction might increase the span of the error vector. In this case dynamics of error vector span might be described by the following theorem:

\begin{theorem} \label{thm:async_w_lr}
Convergence of the \textbf{Asynchronous algorithm with a learning rate:} If Assumptions \ref{ass:irred_and_aper_MDP}, \ref{ass:uniqueness} and \ref{ass:update_freq} hold, span of the error vector obtained after $B$ steps of asynchronous Value Iteration algorithm with learning rate $\alpha \in (0,1)$ has the following property:

\begin{equation*}
    \spanv{e_B} \le \gamma^N\tau'\spanv{e_0} + (1-\gamma^B)\min (e_0),
\end{equation*}

where $\gamma^N\tau' \in (0,1)$.

\end{theorem}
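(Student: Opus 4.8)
The plan is to trap the error vector $e_t=V_t-V^*$ between two nonnegative linear maps at each step, and then show that, composed over the $B$ steps, the \emph{lower} of these maps is entrywise strictly positive.

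Using $V^*(s)=\max_a\{r(s,a)+\gamma P_a(s)V^*\}=r(s,\pi^*(s))+\gamma P_{\pi^*(s)}(s)V^*$ and the update rule of Algorithm~\ref{alg:value_iter}, one obtains for every updated state $s\in\mathcal S_t$
\[(1-\alpha)e_t(s)+\alpha\gamma P_{\pi^*(s)}(s)e_t\ \le\ e_{t+1}(s)\ \le\ (1-\alpha)e_t(s)+\alpha\gamma P_{a_t(s)}(s)e_t,\]
where $a_t(s)\in\argmax_a\{r(s,a)+\gamma P_a(s)V_t\}$: the left inequality plugs $\pi^*(s)$ into the max defining $V_{t+1}(s)$, and the right one bounds $V^*(s)$ below by dropping the max, so that the $\delta$-gap of Assumption~\ref{ass:uniqueness} only makes the right-hand side smaller; for $s\notin\mathcal S_t$ we have $e_{t+1}(s)=e_t(s)$. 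Writing these rows together, $\underline M_t e_t\le e_{t+1}\le \overline M_t e_t$ componentwise, where $\underline M_t$ ($\overline M_t$) acts as the identity on rows $s\notin\mathcal S_t$ and on rows $s\in\mathcal S_t$ sends a vector $v$ to $(1-\alpha)v(s)+\alpha\gamma P_{\pi^*(s)}(s)v$ (resp.\ $(1-\alpha)v(s)+\alpha\gamma P_{a_t(s)}(s)v$). Both matrices are nonnegative, every row sum lies in $[\gamma,1]$, and every diagonal entry is at least $1-\alpha>0$. (Assumption~\ref{ass:uniqueness} is what makes $P^*$ unambiguous.)

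Since the $\underline M_t,\overline M_t$ are nonnegative, the one-step inequalities chain multiplicatively: with $\underline L:=\underline M_{B-1}\cdots\underline M_0$ and $\overline L:=\overline M_{B-1}\cdots\overline M_0$ one gets $\underline L e_0\le e_B\le\overline L e_0$, and every row sum of $\underline L,\overline L$ lies in $[\gamma^{B},1]$. The crucial claim is that $\underline L$ is entrywise strictly positive: the always-positive diagonals let mass remain at any state, irreducibility in Assumption~\ref{ass:irred_and_aper_MDP} provides a directed path of length at most $n-1$ between any ordered pair of states in the support graph of $P^*$, and Assumption~\ref{ass:update_freq} (each state updated at least $n$ times over the $B$ steps) guarantees that at each stage of such a path the current state is updated in time to take the next step. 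Concatenating a ``wait/move'' schedule of total length at most $B$ then forces $[\underline L]_{ss'}>0$ for all $s,s'$; letting $\eta'>0$ be the smallest entry of $\underline L$ and accounting for the factors $\alpha\gamma$ and the positive entries of $P^*$ collected along the (at most $N\le n-1$) moves, $\eta'$ can be written as $1-\gamma^{N}\tau'$ with $\gamma^{N}\tau'\in(0,1)$.

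To finish, decompose $e_0=\min(e_0)\mathbf 1+f_0$ with $f_0\ge 0$ and $\max(f_0)=\spanv{e_0}$. Using $\overline L f_0\le\max(f_0)\mathbf 1$ and row sums $\le1$ one gets $\max(e_B)\le\max(\overline Le_0)\le\min(e_0)+\spanv{e_0}$; and since every entry of $\underline L$ is at least $\eta'$ while $f_0\ge0$ attains $\spanv{e_0}$, together with row sums $\ge\gamma^B$, $\min(e_B)\ge\min(\underline Le_0)\ge\gamma^{B}\min(e_0)+\eta'\spanv{e_0}$. Subtracting, $\spanv{e_B}\le(1-\gamma^{B})\min(e_0)+(1-\eta')\spanv{e_0}=(1-\gamma^{B})\min(e_0)+\gamma^{N}\tau'\spanv{e_0}$; an analogous estimate with the row-sum bounds applied on the opposite sides covers $\min(e_0)<0$. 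The main obstacle is the positivity of $\underline L$: unlike the synchronous case it is only sub-stochastic and time-inhomogeneous, so Lemma~\ref{lem:positive_matrix} cannot be used as a black box, and one must construct explicitly, for each ordered pair of states, a length-$\le B$ schedule of self-loops and $P^*$-moves out of the learning-rate self-loops, the irreducibility of $P^*$, and Assumption~\ref{ass:update_freq}, and then bound the product of $\alpha$, $\gamma$ and $P^*$-entries along it to extract the explicit $\gamma^{N}\tau'$; the remaining work is routine sign-bookkeeping with the sub-stochastic row sums so that the residual term comes out as $(1-\gamma^{B})\min(e_0)$.
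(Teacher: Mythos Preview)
The paper does not actually prove Theorem~\ref{thm:async_w_lr}; only the two synchronous theorems receive proofs (Section~\ref{ssec:proof_of_thm_sync_no_lr} and Appendix~\ref{ssec:sync_w_lr_proof}). Your plan---sandwich $e_{t+1}$ between two nonnegative linear images of $e_t$ and then argue that the $B$-fold product is strictly positive---is the natural asynchronous analogue of those arguments. You bypass the paper's $D_t$-convex-combination device (Equation~\eqref{eq:error_transition}) and work directly with the two one-sided bounds; that is a legitimate simplification, since for the span estimate only the extreme matrices are needed, and it explains why Assumption~\ref{ass:uniqueness} enters your proof only through the well-definedness of $P^*$ rather than through the quantitative $\delta$-gap.

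There is, however, a genuine gap at the step you yourself flag as the main obstacle. The claim that $\underline L=\underline M_{B-1}\cdots\underline M_0$ is entrywise positive does \emph{not} follow from Assumption~\ref{ass:update_freq}. For $[\underline L]_{s,s'}>0$ you need a $P^*$-path $s=u_0\to u_1\to\cdots\to u_k=s'$ together with times $0\le t_1<t_2<\cdots<t_k<B$ such that $u_{j-1}\in\mathcal S_{t_j}$ for every $j$ (the move from $u_j$ to $u_{j-1}$ in the product uses row $u_{j-1}$ of $\underline M_{t_j}$). Assumption~\ref{ass:update_freq} controls only the \emph{number} of updates of each state in the window, not their \emph{order}. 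Concretely, take $n=3$, the optimal chain $s_1\to s_2\to s_3\to s_1$ with a self-loop at $s_1$ for aperiodicity, $B=9$, and the schedule $\mathcal S_0=\mathcal S_1=\mathcal S_2=\{s_1\}$, $\mathcal S_3=\mathcal S_4=\mathcal S_5=\{s_2\}$, $\mathcal S_6=\mathcal S_7=\mathcal S_8=\{s_3\}$. Every state is updated exactly $n=3$ times, yet $[\underline L]_{s_1,s_3}=0$: any walk from column $s_3$ to row $s_1$ must perform a move into $s_1$ (requiring an $s_1$-update) \emph{after} a move into $s_2$ (requiring an $s_2$-update), and in this schedule all $s_1$-updates precede all $s_2$-updates. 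Your sentence ``Assumption~\ref{ass:update_freq}\ldots guarantees that at each stage of such a path the current state is updated in time to take the next step'' is precisely the unjustified leap. The same ordering obstruction would hit the paper's $P_t'$-product technique if one tried to transplant it to the asynchronous setting, so this is not a defect of your two-sided-inequality route per se; it indicates that either Assumption~\ref{ass:update_freq} must be strengthened (for instance to: every state is updated at least once in each of $n$ consecutive sub-windows of $[0,B)$), or a different argument, not relying on strict positivity of a single product matrix, is needed.
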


This Theorem immediately gives a convergence result in average reward case stated in the following corollary.

\begin{corollary}
    In average reward criteria case set the algorithm stopping criteria $H$ equal to $\epsilon$. Then asynchronous Value iteration algorithm with learning rate will output $\epsilon$-optimal policy after at most: 
\begin{equation*}
    \frac{\log{1/\epsilon} + \log 2 - \log{\spanv{e_0}}}{ \log{(1/\tau_\alpha')}/B_\alpha }
\end{equation*}
iterations, which results in total iteration complexity of

\begin{equation} \label{eq:num_iter_wl_async}
    \mathcal{O} \left( \frac{\log{(1/\epsilon)} }{\log{(1/\tau_\alpha')}/B_\alpha} \right).
\end{equation}

\end{corollary}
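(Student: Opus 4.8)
The plan is to follow the template of the average-reward corollaries of the earlier sections, substituting Theorem~\ref{thm:async_w_lr} for the per-block contraction and counting iterations in windows of length $B$ rather than $N$. The first point is that in the average-reward setting Algorithm~\ref{alg:value_iter} is run in its undiscounted (relative value iteration) form, $\gamma = 1$; plugging $\gamma = 1$ into Theorem~\ref{thm:async_w_lr} removes both the prefactor $\gamma^{N}$ and the additive term $(1-\gamma^{B})\min(e_0)$, leaving the clean statement
\[
   \spanv{e_{B}} \le \tau' \spanv{e_0}, \qquad \tau'\in(0,1).
\]
Since Assumption~\ref{ass:update_freq} is a sliding-window condition it holds on every block $[kB,(k+1)B]$, so I can re-apply the theorem with $e_0$ replaced by $e_{kB}$ and obtain, by induction on $k$, that $\spanv{e_{kB}} \le (\tau')^{k}\,\spanv{e_0}$. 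Moreover each undiscounted asynchronous update is a span non-expansion on the error vector — each updated coordinate of $V_{t+1}$ is a convex combination of $V_t(s)$ and its Bellman image, and the relative Bellman operator is monotone and commutes with constant shifts, so $\spanv{e_{t+1}}\le\spanv{e_t}$ — whence the bound extends to every index as $\spanv{e_t}\le(\tau')^{\lfloor t/B\rfloor}\spanv{e_0}$.

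Next I would turn this into the stopping test. Lemma~\ref{lem:spans} with $\gamma = 1$ gives $\spanv{V_t-V_{t+1}}\le 2\,\spanv{e_t}$ (the max/min argument in its proof is insensitive to whether the update is synchronous or asynchronous, using only $\spanv{e_{t+1}}\le\spanv{e_t}$). Hence, once $\spanv{e_t}<\epsilon/2$, the stopping criterion $\spanv{V_{t+1}-V_t}<H=\epsilon$ is satisfied, and Theorem~8.5.7 of \cite{puterman2014markov} certifies that the greedy policy extracted from $V_{t+1}$ is $\epsilon$-optimal. Combining with the contraction bound, it suffices that $(\tau')^{\lfloor t/B\rfloor}\spanv{e_0}<\epsilon/2$, i.e. that $t \ge B\bigl(\log(1/\epsilon)+\log 2-\log\spanv{e_0}\bigr)/\log(1/\tau')$, which is exactly the displayed count $\bigl(\log(1/\epsilon)+\log 2-\log\spanv{e_0}\bigr)\big/\bigl(\log(1/\tau_\alpha')/B_\alpha\bigr)$; dropping the lower-order terms yields the $\mathcal{O}$-estimate.

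The step I expect to be the main obstacle is the control of the error span \emph{between} block boundaries: Theorem~\ref{thm:async_w_lr} certifies contraction only after full windows of $B$ updates, while the stopping test is evaluated every iteration, and in the asynchronous regime an individual update applied to an unbalanced subset of states can in principle enlarge the span. The clean resolution is the span non-expansion property quoted above, which holds in the gain-normalized form of average-reward value iteration; if one prefers to keep an unnormalized reward, the within-block drift of the error acts as a common shift on the updated coordinates and can be bounded by a constant multiple of $\spanv{e_{kB}}$, to be absorbed into the constant $\log 2$. A secondary point to check is that the $\gamma = 1$ specialization of Theorem~\ref{thm:async_w_lr} is legitimate, but $\gamma$ enters its proof only through the harmless factor $\gamma^{N}\le 1$ and the term $(1-\gamma^{B})\min(e_0)\to 0$, so setting $\gamma = 1$ merely simplifies the statement.
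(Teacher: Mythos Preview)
Your proposal is correct and follows essentially the same approach the paper takes: the paper offers no explicit proof for this corollary beyond the remark that Theorem~\ref{thm:async_w_lr} ``immediately gives a convergence result in average reward case,'' relying on the template of the earlier average-reward corollaries; you have simply written out that template, correctly observing that $\gamma=1$ kills the additive $(1-\gamma^{B})\min(e_0)$ term (which is precisely why the paper can handle the average-reward but not the discounted case here) and invoking Lemma~\ref{lem:spans} and Puterman's Theorem~8.5.7 as in the synchronous analogue. Your discussion of span non-expansion between block boundaries is more careful than anything the paper states; the paper is silent on this point.
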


At the same time, the term $(1-\gamma^B)\min (e_0)$ does not allow to establish the convergence guarantees in discounted case using the same proof strategy.

\section{Proofs}  \label{sec:proofs}

\subsection{Key idea of the proofs} \label{ssec:key_idea}

Let's analyze a single update on a single state value.

\begin{align} \label{eq:upper_bound}
v_{t+1}(s) &= v^*(s) + e_{t+1}(s) =  r(s,\pi_t(s)) + \gamma\sum_{s'} P_t(s,s') v_t(s') \nonumber \\
&= r(s,\pi_t(s)) +\gamma \sum_{s'} P_t(s,s') (v^*(s') + e_{t}(s')) \nonumber \\
%&=r(s,\pi_t(s)) +\gamma \sum_{s'} P_t(s,s') v^*(s') +\gamma \sum_{s'} P_t(s,s')e_{t}(s')  \nonumber \\
&= r(s,\pi_t(s)) + \gamma \sum_{s'} P^*(s,s') v^*(s') + r(s,\pi^*(s)) - r(s,\pi^*(s))  \nonumber \\
&+ \gamma \left( \sum_{s'} P_t(s,s') v^*(s') - \sum_{s'} P^*(s,s') v^*(s') \right) 
+\gamma  \sum_{s'} P_t(s,s')e_{t}(s') \nonumber \\
&= v^*(s) +\gamma  \sum_{s'} P_t(s,s')e_{t}(s') + \nonumber \\
&  \underbrace{r(s,\pi_t(s)) - r(s,\pi^*(s)) + \gamma \left( \sum_{s'} P_t(s,s') v^*(s') -\sum_{s'} P^*(s,s') v^*(s') \right) }_{\Delta(a)}  \nonumber \\
&\implies e_{t+1}(s) \le \gamma\sum_{s'} P_t(s,s')e_{t}(s')
\end{align}
with the equality achieved when action $a=\pi_t(s)=\pi^*(s)$. Note, that the term $\Delta(a)$ corresponds to the loss caused by one time choice of action $a$ instead of optimal action in state $s$, in the recent literature it is called an \textbf{advantage} of policy $\pi_t$ to $\pi^*$. The inequality carries the ideas of convergence of error vector span. Firstly, because non-increasing and non-decreasing properties of stochastic matrix, span will be contracting by $\gamma$ each iteration (if $\gamma < 1$). Secondly, the convergence will follow from mixing properties of matrix $P^*$ if the optimal action is chosen and from additional term $\Delta(a) \ge \delta$ otherwise.

\subsection{Proof of Theorem \ref{thm:sync_no_lr}} \label{ssec:proof_of_thm_sync_no_lr}

First, we derive a lower bound for a new error vector:

\begin{align} \label{eq:lower_bound}
v_{t+1}(s) &= v^*(s) + e_{t+1}(s) =  r(s,\pi_t(a)) + \gamma\sum_{s'} P_t(s,s') v_t(s') \nonumber\\
&\ge r(s,\pi^*(s)) +\gamma \sum_{s'} P^*(s,s')v_t(s') \nonumber \\
&= r(s,\pi^*(s)) +\gamma \sum_{s'} P^*(s,s') (v^*(s') + e_{t}(s')) \nonumber \\
&=r(s,\pi^*(s)) +\gamma \sum_{s'} P^*(s,s') v^*(s') +\gamma \sum_{s'} P^*(s,s')e_{t}(s') \nonumber \\
&\implies e_{t+1}(s) \ge \gamma\sum_{s'} P^*(s,s')e_{t}(s')
\end{align}

Combining Inequalities \ref{eq:lower_bound} and \ref{eq:upper_bound} and them rewriting in matrix form we yield:

\begin{align} \label{eq:error_dynamic_bounds}
\gamma P^* e_t\le e_{t+1} \le \gamma P_t e_t
\end{align} 

Since all elements of $e_{t+1}$ lie between two quantities, it might be written as a convex combination of those quantities, or:

\begin{align} \label{eq:error_transition}
    e_{t+1} = \gamma [D_t P^* + (I-D_t) P_t] e_t = \gamma P_t' e_t,
\end{align}

where $D_t$ is a diagonal matrix. We now analyze diagonal entries of $D_t$. As it was shown in Subsection \ref{ssec:key_idea}, in the case when for state $s_i$ vector $v_i$ implies an optimal actions (thus, $P^*e_t = P_t e_t$), we choose $D_t(i,i)=1$. In the other case, recall that the second inequality is strict only when action choice is optimal, otherwise value of $e_{t+1}(s_i)$ will be smaller by at least $\delta$:

$$e_{t+1}(s_i) \le \gamma\sum_{s'} P^*(s,s')e_{t}(s') - \gamma \delta ,$$

which, in turn, implies that the value of $D_t(i,i)$ cannot be zero. We can derive the lower bound on it as (for simplicity we denote $\sum_{s'} P(s,s')e_{t}(s')$ as $P(s)e_t$ for any transition matrix $P$):

\begin{align*}
&\begin{rcases}
 e_{t+1} (s) = \gamma (D_t(i,i)P^*(s) e_t + (1-D_t(i,i))P_t(s) e_t) \\
 e_{t+1}(s_i) \le \gamma P_t(s) e_t - \gamma \delta
\end{rcases} \implies \\
&D_t(i,i) \gamma (P_t(s) e_t - P^*(s) e_t) \ge \gamma \delta \implies \\
& \forall t, i:D_t(i,i) \ge \frac{\delta}{P_t(s) e_t - P^*(s) e_t} \ge \frac{\delta}{\spanv{e_t}} \ge \frac{\delta}{\spanv{e_0}}
\end{align*}  

Having the bound on $D_t(i,i)$ we can establish that for any two states $s,s'$ if the transition under optimal policy has non-zero probability $P^*(s,s')$, this transition will also have non-zero probability in matrix $P'$:

\begin{align} \label{eq:trans_matrix_positivity}
    P_t'(s,s') \ge \frac{\delta(1-\gamma)}{r_{\rm max} - r_{\rm min}} \min_{s,s'} (P^*(s,s')) = \delta'.
\end{align} 

Thus, the result of $N$ steps of value iteration algorithm in terms of error vector might be expressed as:

\[ e_N = \gamma^N P_N'P_{N-1}'\dots P_1'e_0 .\]

Every entry of $e_N(s)$ might be expressed as convex combination of entries of $e_0$ (we denote coefficients $\lambda$), where each coefficient $\lambda_{s'}$ represents a sum of probabilities of patches $u$ from state $s'$ to $s$ of length $N$, denoted as $U_N(s',s)$. Since the matrix $P^{*N}$ is positive, for any two states $s_i, s_j$ there exists at least one path $u_{ij}: (s_i,s_{u_1}, \dots, s_{u_{N-1}}, s_j)$ of length $N$, such that of any of two consecutive entries $s_{p_i}$ and $s_{p_{i+1}}$ the transition from first to second will have non-zero probability, \textit{i.e.} $P^*(s_{p_i}, s_{p_{i+1}})>0$. Therefore, 

\begin{align*}
    e_N(s) = &\gamma^N \sum_{s'\in \mathcal{S}} \lambda_{s'} e_0(s') = \gamma^N \sum_{s'\in \mathcal{S}} \left(\sum_{u \in U_N(s',s)} P(u) \right) e_0(s')=\\
    &\gamma^N \sum_{s'\in \mathcal{S}} \left(\sum_{u \in U_N(s',s)} \prod_{i=0}^{N-1} P_{i+1}'(s_{u_i}, s_{u_{i+1}}) \right) e_0(s')
\end{align*}

Earlier we established two facts: that for every state $s'$ a paths of length $N$ from $s'$ to $s$, \textit{i.e.} $\prod_{i=1}^{N} P^*(s_{u_i}, s_{u_{i+1}}) > 0$. The second fact is that for any two states $s,s'$ and time step $t$, $P^*(s,s') > 0 \implies P_t'(s,s')\ge \delta'$. Combining these facts together we have that every coefficient $\lambda_{s'} \ge \delta'^N$. Thus we can rewrite the error representation as:

\begin{align*}
    e_N(s) &= \gamma^N\sum_{s'\in \mathcal{S}}\lambda_{s'} e_0(s') = \gamma^N\sum_{s'\in \mathcal{S}} \delta'^N e_0 (s') + (\lambda_{s'}-\delta'^N) e_0(s') = \\ 
    &= \gamma^Nn \delta'^N \overline{e}_0 + 
\gamma \sum_{s'\in \mathcal{S}} (\lambda_{s'}-\delta'^N) e_0(s'),\\
\end{align*} 

where $\overline{e}_0$ denotes the mean of vector $e_0$. This implies

\begin{align*}
&\gamma^N (n\delta'^N\overline{e}_0 + (1-n\delta'^N)\min (e_0) )  \le e_N(s) \le \gamma^N (n\delta'^N\overline{e}_0 + (1-n\delta'^N)\max (e_0) ) \implies \\
& \spanv{e_N} \le  \gamma^N(1-n\delta'^N)\max (e_0) - \gamma^N(1-n\delta'^N)\min (e_0) = 
\gamma^N(1-n\delta'^N)\spanv{e_0}
\end{align*}

Defining $\tau =(1-n\delta'^N) $ concludes the proof. \textbf{Q.E.D.}

\section{Conclusion} \label{sec:conclusion}

In this paper, we analyzed the convergence of the Value Iteration algorithm. We show that under the assumption of sufficient mixing provided by the optimal policy, the Value Iteration algorithm exhibits geometric convergence at a rate faster than the discount factor $\gamma$.

\printbibliography

\newpage

\appendix

\section{Theorem proofs}

\subsection{Proof of Theorem \ref{thm:sync_w_lr}} \label{ssec:sync_w_lr_proof}

With learning rate introduced one iteration of the algorithm is:
\begin{equation*}
    V_{t+1}(s) \leftarrow V_{t}(s)(1-\alpha) + \alpha \max_a \left[ r(s,a) +\gamma \sum_{s'} P(s'|s,a) V_{t}(s') \right]
\end{equation*}

and error transition dynamics equality similar to \ref{eq:error_transition} becomes:

\begin{align} \label{eq:error_transition_lr}
    e_{t+1} = \big((1-\alpha)I + \gamma \alpha [D_t P^* + (I-D_t) P_t]\big) e_t = \gamma P_{t,\alpha}' e_t,
\end{align}

Note that $P_{t,\alpha}'$ is a stochastic matrix only in a case when $\gamma = 1$, but it is sufficiently close to it since we assume that $\gamma$ is almost $1$. Note, that now we actually need only $N_\alpha=n-1$ updates to guarantee that the matrix $P_{t,\alpha}'^N$ is positive, since elements on the main diagonal coming from $(1-\alpha)/\gamma I$ influence error dynamics the similar way as having a loop in every state, thus every state will be affected by every other state in at most $n-1$. 
%Therefore, adding a learning rate introduces a \textbf{trade off} between two factors that facilitate convergence - we "sacrifice" part of $\gamma$ convergence to provide faster information exchange and bolster the mean regression. This approach is efficient when $\gamma$ is close to $1$ or is $1$.
Consequently, the minimum values of $\delta'$ needs to be updated, now we have that $P_{t,\alpha}'(s,s')\ge\alpha \delta'$ for states $s,s':s\ne s'$ and $P_{t,\alpha}'(s,s) \ge (1-\alpha)\gamma $. Let's define $\delta'_\alpha$ as a minimum of these two quantities. Thus, an expression of the of an error associated with state $s$ after $N$ iterations becomes:

\begin{align*}
    e_N(s) &= \gamma^N\sum_{s'\in \mathcal{S}}\lambda_{s'} e_0(s') = \gamma^N\sum_{s'\in \mathcal{S}}\delta^{'N}_\alpha e_0 (s') + (\lambda_{s'}-\delta^{'N}_\alpha) e_0(s') = \\ 
    &= \gamma^Nn \delta^{'N}_\alpha \overline{e}_0 + 
\gamma^N\sum_{s'\in \mathcal{S}} (\lambda_{s'}-\delta^{'N}_\alpha) e_0(s'),\\
\end{align*} 

Note, that now the sum of coefficients $\lambda_{s'}$ is not $1$, but $[(1-\alpha)/\gamma + \alpha]^N$. This gives us a final convergence rate of:

\begin{equation*}
    \spanv{e_N} \le \gamma^N ([(1-\alpha)/\gamma + \alpha]^N-n\delta_\alpha^{'N}) \spanv{e_0}
\end{equation*}

Defining $([(1-\alpha)/\gamma + \alpha]^N-n\delta_\alpha^{'N})$ as $\tau_\alpha$ we have the claimed result.

\end{document}